\newtheorem{theorem}{Theorem}
\newtheorem{lemma}{Lemma}
\newtheorem{definition}{Definition}
\newcommand{\cR}{\mathbb{R}}
\title{Notes on Worst-case Inefficiency of Gradient Descent Even in $\cR^2$}
\author[1]{Shiliang Zuo}
\affil[1]{University of Illinois at Urbana-Champaign}
\date{}
\begin{document}
	\maketitle
	
\begin{abstract}
    Gradient descent is a popular algorithm in optimization, and its performance in convex settings is mostly well understood. In non-convex settings, it has been shown that gradient descent is able to escape saddle points asymptotically and converge to local minimizers~\cite{lee2016gradientminimizers}. Recent studies also show a perturbed version of gradient descent is enough to escape saddle points efficiently~\cite{Jin:EscapeSaddle,ge2015escaping}. In this paper we show a negative result: gradient descent may take exponential time to escape saddle points, with non-pathological two dimensional functions. While our focus is theoretical, we also conduct experiments verifying our theoretical result. Through our analysis we demonstrate that stochasticity is essential to escape saddle points efficiently. At the time of writing, this technical report was the product of a course research project of the author. 
\end{abstract}

\section{Introduction} 
With the advance of machine learning and deep learning, understanding how different optimization methods affect the performance of machine learning systems, either theoretically or empirically, has attracted much interest recently. Classical theory of optimization methods (e.g. gradient descent or stochastic gradient descent), mostly focus on convex problems (for an overview, see e.g.~\cite{bubeck2014convex}). However, in most machine learning problems, the function at hand can be highly non-convex, which calls for new analysis of classical optimization methods (for an overview of non-convex optimization, see e.g.~\cite{Jain_2017}). Non-convex functions may admit stationary points which are not global minima, but instead are saddle points and local extremas, as opposed to convex functions where any stationary point is a global minima. 

Gradient Descent is a classical optimization method, and has gained popularity in a wide range of applications, including machine learning, big data optimization, etc. Different variations of gradient descent has also developed through the years, with most notable ones being heavy ball and Nestorov's momentum~\cite{nesterov1983method} (sometimes also referred to as accelerated gradient descent). The convergence behavior of gradient descent is mostly understood in convex problems, but we currently have far less understanding of gradient descent for non-convex problems. In many applications, finding a local minima is often enough, and recent studies have shown that for some problems, spurious local minima doesn't exist, meaning all local minimas are global~\cite{ge2017nospuriouslocalminima}. 

However, this isn't to say applying gradient descent in non-convex problems doesn't have any issues. Due to the existence of saddle points, gradient descent may get slowed down, and may even get stuck at saddle points. In fact, it is fairly easy to construct some function with an artificial initialization scheme, such that gradient descent will take an exponential number of iterates to escape a saddle point. Du et. al.~\cite{du2017gradient} considers the function $f(x) = x_1^2 - x_2^2$. If the initial point is in an exponential thin band along the $x_1$ axis centered at the origin, then gradient descent will take exponential iterates to escape the neighborhood of the saddle point at $(0, 0)$. 

In this paper, we ask the following, does there exist some non-pathological function with a fairly reasonable initialization scheme, such that gradient descent is inefficient when escaping saddle points? The answer is positive, even if the function is 2-dimensional. Our result show the worst case inefficiency of gradient descent when applied to non-convex problems.

\subsection{Related Work}
In non-convex problems, saddle points slow down gradient descent, and convergence speed is slower than its convex counterparts. Nontheless, it is shown that gradient descent is able to escape saddle points, and converge to local minimizers asymptotically, under a random initialization scheme~\cite{lee2016gradientminimizers}. Further, Ge et. al. and Jin et. al. showed a perturbed version of gradient descent is enough to escape saddle points efficiently~\cite{Jin:EscapeSaddle, ge2015escaping}. 
Their modified version of gradient descent is based on an exploitation-exploration trade-off: when the gradient is large, standard gradient descent is applied, thus exploiting larger gradients; when the gradient is small, a certain randomness is added to the descent direction, thus exploring in hope to escape saddle points. Researchers also analyzed the landscape of different classes of functions, and Ge et. al. showed for matrix completion, no spurious local minima exists, that is all local minimizers are necessarily global~\cite{ge2017nospuriouslocalminima, ge2016matrix}. 

On negative results, a work mostly related to our paper is by Du et. al.~\cite{du2017gradient}. They showed that for some non-pathological $d$ dimensional function with $d$ saddle points, gradient descent takes exponential time in $d$ to converge to a local minima. 

\subsection{Summary of Results}

In this work, we show a negative result related to the performance of gradient descent on non-convex functions. We construct a two-dimensional smooth function which gradient descent takes exponential time in the number of saddle points to find the global minima. 

Our paper is organized as follows. Section 2 gives a high-level idea of our construction and intuition on why gradient descent fails to escape saddle points efficiently. Section 3 gives formal mathematical proofs. Section 4 contains our empirical studies verifying our theoretical finding. 

\section{Main Result and Proof Idea}
In this section we give a high-level description of our objective function, and show how vanilla gradient descent fails to escape saddle points efficiently. 
We adopt the classical definition of saddle points. 
\begin{definition}
A saddle point is a stationary point that is neither a local minima nor a local maxima. 
\end{definition}
\subsection{High-level Construction of Objective Function}

We will give the construction of our function in 5 steps. We remark that the crux of our construction are in the first two steps. Step 3 through 5 essentially makes the construction more rigorous. 

\textbf{Step 1. Dividing Blocks. }Define 
\begin{align*}
B_1 &= [0, 1] \times [0, 1]\\
B_2 &= [1,2] \times [0,1] \\
B_i &= B_{i-2} + (1, 1)
\end{align*}
In other words, $B_i$ will be $B_{i-2}$ translated to the right by 1 unit then up by 1 unit. 

Define the region $D = \bigcup_{i=1}^n B_i$. We will refer to $B_i$ as the $i$-th block. The nature of $B_n$ will be different from other blocks, and will be referred to as the final block. Our function will be defined on the region $D$, and we will later extend the domain to $\cR^2$. 

\textbf{Step 2. Constructing saddle points in each block. } In each block, the function will be quadratic with a saddle point in the center. In odd blocks the saddle point will be a local maxima along $x_1$-axis, and a local minima along $x_2$-axis. In even blocks the saddle point will be a local maxima along $x_2$-axis, and a local minima along $x_1$-axis. Specifically, in odd blocks the function is defined as
\begin{align*}
    f(x_1, x_2) = -\gamma (x_1 - s_1)^2 + L (x_2 - s_2)^2
\end{align*}
and in even blocks
\begin{align*}
    f(x_1, x_2) = L (x_1 - s_1)^2 - \gamma (x_2 - s_2)^2
\end{align*}
where $(s_1, s_2)$ is the center of the respective block. 
Finally, a local minima will occur in the final block. The function inside the final block is defined as
\begin{align*}
    f(x_1, x_2) = L (x_1 - s_1)^2 + L (x_2 - s_2)^2
\end{align*}

\textbf{Step 3. Flipping the quadratic. }
When running gradient descent with initial point inside the first block, we want the iterates to follow along the blocks. That is, in odd blocks, iterates would go toward the right to the next block, and in even blocks, iterates would go upward to the next block. However, it is quite possible in some iterate the point lies on the `wrong side of the block'. For odd blocks, this is the left half of itself and for even blocks this is the bottom half. For example, if some point were in the left half of some odd block, gradient descent iterate would make the next iterate go toward the left. To combat this, we simply flip the quadratic on the left half of odd blocks and the bottom half of even blocks.

\textbf{Step 4. Adding buffer regions. } Up until step 3, our function is not continuous. To resolve this issue we add buffer regions between adjacent blocks. Our domain $D$ will have to be modified to include the buffer regions. 

% Consider the transition from the first block to the second block. We will add a buffer region as follows:
% \begin{align*}
%     f(x_1, x_2) =
%     \begin{cases} -\gamma (x_1 - s_1)^2 + L (x_2 - s_2)^2 &\mbox{if } 0< x_1 < 1 \\
%     g(x_1, x_2) &\mbox{if } 1 \le x_1 \le 2 \\
%     L(x_1 - s_1)^2  - \gamma(x_2 - s_2)^2 - \nu  & \mbox{if } 2 < x_1 \le 3
%     \end{cases}
% \end{align*}
% Here $g$ is the buffer function, and $\tau \le x_1 \le 2\tau$ is the buffer region. We add a buffer region between adjacent blocks. Our domain $D$ will have to be modified to include the buffer regions. 

\textbf{Step 5. Extending to $\cR^2$. }Our current construction is defined on a restricted region. To extend our function to $\cR^2$, we apply the Whitney extension theorem. The extension may introduce new stationary points, but we will show gradient descent never leaves region $D$. 

\subsection{Exponential Time needed to Escape Saddle Points}
Let $f$ be the function outlined previously. We will show the high level ideas that the time needed for gradient descent to converge to the unique local minima is exponential in the number of blocks $n$. For simplicity, we will first consider the behavior of gradient descent running on the function constructed up until step 2. 

Consider some even block. Let $d_2$ be the distance between $x_2$ and the center line along $x_1$ axis: $d_2 = |x_2 - s_2|$. In this block $x_2$ is trying to escape to the next block, with $d_2$ growing by a rate of $(1 + 2\eta\gamma)$. In the previous block $x_2$ is converging to the center with rate $(1 - 2\eta L)$, while $x_1$ is trying to escape. Suppose $t$ iterations is needed to escape the previous block, and $t'$ iteration is needed to escape the current block. Then $d_2$ is no larger than $(1 - 2\eta L) ^ t$ when $x_1$ successfully escapes the previous block. Thus we must have
\[
    (1 - 2\eta L)^t (1 + 2\eta \gamma)^{t'} \ge 1
\]
Taking logarithms and by simple algebra we can lower bound $t'$ as
\[
    t' \ge \frac{L}{\gamma} t
\]
Thus the number of iterations needed to escape saddle points is growing by a multiplicative factor each time. This essentially shows in order to escape $n$ saddle points, we need at least $\Omega((\frac{L}{\gamma})^n)$ iterations. 

When step 3 through 5 is added into consideration, more careful treatment is needed, and we defer the formal proof to the next section. 

\begin{figure}
    \centering
    \includegraphics[scale = 0.5]{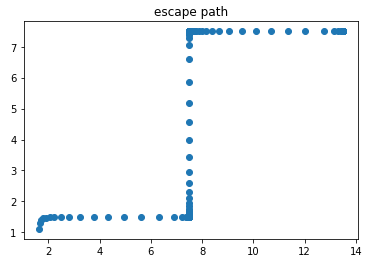}
    \caption{Escape Path of Gradient Descent}
    \label{fig:escape_path}
\end{figure}

\section{Proofs}

\subsection{Detailed Construction}
In our construction, $L$ and $\gamma$ will be fixed constants with $L \ge \gamma$, and let $L_2 = 4L$. $[x]_{2\tau}$ will denote the largest number no larger than $x$ that is a multiple of $2\tau$. We will use $(s_1^i, s_2^i)$ to denote the center of a block $i$, though we will usually omit the superscript as the block in reference is usually clear from context. 

Define 
\begin{align*}
    B_1 = [0, \tau] \times [0, \tau],&\quad B'_1 = [\tau , 2\tau] \times [0, \tau] \\
    B_2 = [2\tau, 3\tau] \times [0,  \tau],&\quad B'_2 = [2\tau, 3\tau] \times [\tau, 2\tau]\\
    B_i = B_{i-2} + (2\tau, 2\tau),&\quad B'_i = B'_{i-2} + (2\tau, 2\tau)
\end{align*}
We will refer to $B_i$ with $i$ odd as odd blocks, with $i$ even as even blocks; $B'_i$ with odd $i$ as odd-to-even buffers, with $i$ even as even-to-odd buffers. In the following $\nu$ will be some constant chosen later.

If $(x_1, x_2) \in B_i$ and $i$ is odd:
\begin{align*}
    f(x_1, x_2) = -i\nu +
    \begin{cases} -\gamma (x_1 - s_1)^2 + L (x_2 - s_2)^2 &\mbox{if } x_1 > s_1 \\
                L_2 (x_1 - s_1)^2 + L (x_2 - s_2)^2  & \mbox{if } x_1 \le s_1
    \end{cases}
\end{align*}
If $(x_1, x_2) \in B_i$ and $i$ is even:
\begin{align*}
    f(x_1, x_2) = -i\nu +
    \begin{cases} L (x_1 - s_1)^2 - \gamma (x_2 - s_2)^2 &\mbox{if } x_2 > s_2 \\
                L (x_1 - s_1)^2 + L_2 (x_2 - s_2)^2  & \mbox{if } x_2 \le s_2
    \end{cases}
\end{align*}
If $(x_1, x_2) \in B'_i$ and $i$ is odd:
\begin{align*}
    f(x_1, x_2) = -i\nu + g(x_1 - [x_1]_{2\tau}, x_2 - [x_2]_{2\tau})
\end{align*}
If $(x_1, x_2) \in B'_i$ and $i$ is even:
\begin{align*}
    f(x_1, x_2) = -i\nu + g(x_2 - [x_2]_{2\tau}, x_1 - [x_1]_{2\tau})
\end{align*}
If $(x_1, x_2) \in B_n$:
\begin{align*}
    f(x_1, x_2) = -n\nu + L(x_1 - s_1)^2 + L(x_2 - s_2)^2
\end{align*}

It remains to construct our buffer function $g$ and constant $\nu$. We will use spline theory to construct our buffer function. 

\begin{lemma}[See \cite{dougherty1989nonnegativity}]
For a differentiable function $f$, given $f(y_0), f(y_1), f'(y_0), f'(y_1)$, the cubic Hermite interpolant defined by
\[
    p(y) = c_0 + c_1\delta_y + c_2\delta_y^2 + c_3\delta_y^3
\]
where
\begin{align*}
    c_0 &= f(y_0)\\
    c_1 &= f'(y_0)\\
    c_2 &= \frac{3S - f'(y_1) - 2f'(y_0)}{y_1 - y_0}\\
    c_3 &= -\frac{2S - f'(y_1) - f'(y_0)}{(y_1 - y_0)^2}\\
    \delta_y &= y - y_0\\
    S &= \frac{f(y_1) - f(y_0)}{y_1 - y_0}
\end{align*}
satisfies $p(y_0) = f(y_0), p(y_1) = f(y_1), p'(y_0) = f'(y_0), p'(y_1) = f'(y_1)$. 
\end{lemma}

\begin{lemma}
There exists univariate function $g_1, g_2$, such that if $g(x_1,x_2) = g_1(x_1) + g_2(x_1) x_2^2$, function $f$ is continous, differentiable, and admits Lipschitz gradient when $\nu = \frac{1}{4}L \tau^2 - g_1(2\tau)$. 
\end{lemma}
\begin{proof}
Define 
\[p(x) = \frac{1}{2}(\gamma - L)x^2 + (L\tau - 2\gamma\tau) x,
\]
\[g_1(x) = p(x) - p(\tau) - \frac{1}{4}\gamma \tau^2.
\] Then
\begin{align*}
    g_1(\tau) &=  - \frac{1}{4}\gamma \tau^2\\
    g_1(2\tau) &= \frac{1}{4}L\gamma^2 - \nu \\
    g'_1(\tau) &= p'(\tau) = -\gamma \tau\\
    g'_2(\tau) &= p'(2\tau) = -L\tau
\end{align*}

By previous theorem, if we define $g_2$ to be the cubic function
\begin{align*}
    g_2(x; c_1, c_2) = c_2 - \frac{10(c_1 - c_2)(x - 2\tau)^3}{\tau^3} -\frac{15(c_1 - c_2)(x - 2\tau)^4}{\tau^4} - \frac{6(c_1 - c_2)(x - 2\tau)^5}{\tau^5}
\end{align*}
Then $g_2(\tau) = c_1, g_2(2\tau) = c_2$. Further,
\[
    g'_2(x; c_1, c_2) = -\frac{30(c_1 - c_2)(x - 2\tau)^2 (x - \tau)^2}{\tau^5}
\]

Thus in odd-to-even buffers when $x_1 < s_1$, we can choose
\[
    g_2(x) = g_2(x; L, L_2)
\]
when $x_1 \ge s_1$, we can choose
\[
    g_2(x) = g_2(x; L, -\gamma)
\]
and similarly for even-to-odd buffers. 

Finally it will easy to verify our construction of buffer function guarantees smoothness conditions on $f$. 
\end{proof}

\subsection{Proof of Main Theorem}

We will first define some notations that will be used throughout our analysis. 
There will be four main block types, which we refer to as odd, even, odd-to-even buffer, even-to-odd buffer, together with a final block. An odd block with the odd-to-even buffer on the right will be called the neighborhood of this odd block, and similarly an even block with the even-to-odd buffer on top of it will be called the neighborhood of this even block. Define $t_i$ to the the number of iterations spent in the block with the $i$-th saddle point, and $t'_i$ to be the number of iterations spent in the buffer block immediately following it. Further, define $T_i$ to be the first time gradient descent iterates escape the neighborhood of the $i$-th saddle point. Then
\[
    T_i + t_{i+1} + t'_{i+1} = T_{i+1}
\]
We use $d_1^t = |x_1^t - s_1|$ to denote the distance from $x_1$ to the central line along $x_2$ axis of the block at iteration $t$, and similarly for $d_2^t$. We will use a random initialization inside the first block, such that
\begin{align}
    d^0_1 \le \frac{\tau}{2e^2} \label{initialization}
\end{align}
We choose $\eta = \frac{1}{4L}$. Note that $\eta = \frac{1}{L_2}$, which combats the problem when some iterate ends up on the `wrong side of the block'. Too see this, suppose some iterate falls in the left half of some odd block, after a single update the $x_1$ coordinate will flip around the $x_1 = s_1$ vertical line, and proceed along the escape path. 

\begin{lemma}
For all $i$, $t'_i \le \frac{1}{\eta \gamma}$. 
\end{lemma}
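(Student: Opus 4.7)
My plan is to bound from below the magnitude of the escape-direction component of the gradient uniformly on the buffer, and then apply a trivial "distance over minimum step length" argument.

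\textbf{Reduction.} By the $x_1 \leftrightarrow x_2$ symmetry between odd-to-even and even-to-odd buffers, it suffices to treat an odd-to-even buffer $B'_i$, where the escape direction is $+x_1$. In local coordinates $u_1 = x_1 - [x_1]_{2\tau} \in [\tau, 2\tau]$, $u_2 = x_2 - [x_2]_{2\tau}$, the buffer function is $g(u_1,u_2) = g_1(u_1) + g_2(u_1)\,u_2^2$, so
\[
\frac{\partial f}{\partial x_1} \;=\; g_1'(u_1) \;+\; g_2'(u_1)\,u_2^2 .
\]

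\textbf{Bounding $g_1'$.} Because $g_1 = p$ up to an additive constant, with $p'' = \gamma - L \le 0$, the derivative $g_1'$ is monotone non-increasing on $[\tau, 2\tau]$ from $g_1'(\tau) = -\gamma\tau$ down to $g_1'(2\tau) = -L\tau$. Hence $g_1'(u_1) \le -\gamma\tau$ everywhere in the buffer.

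\textbf{Handling the cross term.} Using the explicit formula $g_2'(u_1) = -30(c_1-c_2)(u_1-\tau)^2(u_1-2\tau)^2/\tau^5$: on the piece with $c_2 = -\gamma$ one has $c_1-c_2 > 0$ so $g_2'(u_1) \le 0$, and the cross term only makes $\partial f/\partial x_1$ more negative. On the piece with $c_2 = L_2$, where the cross term has the ``wrong'' sign, I would argue that $u_2$ is small enough along the trajectory that $g_2'(u_1)\,u_2^2$ cannot overcome the margin $\gamma\tau$. This uses two ingredients: first, during the preceding odd block the coordinate $x_2$ contracts toward the saddle line at rate $(1-2\eta L)$, so $u_2$ enters the buffer small; second, the specific choice $\eta = 1/L_2$ ensures that a single step in the ``wrong half'' reflects $x_2$ back to the stable cone rather than amplifying $u_2$. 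Together these give $-\partial f/\partial x_1 \ge \gamma\tau$ throughout the buffer.

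\textbf{Race argument.} Each gradient-descent step in the buffer advances $x_1$ by at least $\eta\gamma\tau$. Since the buffer has width $\tau$ in the $x_1$-direction, starting from any $u_1 \ge \tau$ the iterate must exit within $\tau/(\eta\gamma\tau) = 1/(\eta\gamma)$ steps, yielding $t'_i \le 1/(\eta\gamma)$.

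The main obstacle is the control of the cross term $g_2'(u_1)\,u_2^2$ on the sub-piece where it has the wrong sign. The rest of the argument is essentially an elementary traversal-time bound; everything hinges on verifying that $u_2$ stays sufficiently small that the quintic cross term does not erode the $\gamma\tau$ margin provided by $g_1'$.
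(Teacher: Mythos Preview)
Your approach---lower-bound $-\partial f/\partial x_1$ by $\gamma\tau$ on the buffer and then divide the width $\tau$ by the minimum step $\eta\gamma\tau$---is exactly the paper's. The paper simply asserts ``by our construction of $g$, $\partial g/\partial x_1\le -\gamma\tau$'' and proceeds directly to the traversal bound, never decomposing into $g_1'+g_2'\,u_2^2$ or distinguishing the two sub-pieces.

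You are being more careful than the paper, and rightly so: the asserted inequality is \emph{not} a pointwise fact on the $c_2=L_2$ sub-piece, since there $c_1-c_2=L-L_2<0$ gives $g_2'\ge 0$, and $g_2'\,u_2^2$ can be of order $L\tau$ when $|u_2|$ is comparable to $\tau$. So the trajectory control you propose is genuinely what is needed to make the paper's one-line claim true.

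One caveat to watch: your control of $u_2$ (``$x_2$ contracted in the preceding odd block, so $u_2$ enters the buffer small'') is essentially the content of the \emph{next} lemma, and that lemma in turn uses $t_i'\le 1/(\eta\gamma)$. To avoid circularity, fold the two statements into a single induction on the block index: assume $d_2\le \tau/(2e^2)$ on entry to the $i$-th buffer; then along the iterates $g_2'\,u_2^2$ is dominated by the slack $-(g_1'+\gamma\tau)=(L-\gamma)(u_1-\tau)$ (note $g_2'$ vanishes to second order at $u_1=\tau$, so the comparison is uniform once $u_2^2$ is small), the $\gamma\tau$ bound holds along the trajectory, $t_i'\le 1/(\eta\gamma)$ follows, and the smallness of $d_2$ propagates to the next block. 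Structured this way your argument is strictly more complete than the paper's one-line proof.
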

\begin{proof}
Suppose $x^t$ is in some odd-to-even buffer. By our construction of function $g$, $\frac{\partial g}{\partial x_1} \le -\gamma\tau$, thus at each step, $x_1$ moves to the right by at least $\eta\gamma\tau$. The width of the block is $\tau$, thus gradient descent spends at most
\[
    \frac{\tau}{\eta\gamma\tau} = \frac{1}{\eta\gamma}
\]
iterations inside this block. A similar argument goes for even-to-odd buffers. 
\end{proof}

\begin{lemma}
Suppose we follow the initialization in~(\ref{initialization}). Then Gradient descent iterates stay in region $D$. 
\end{lemma}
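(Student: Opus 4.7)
The plan is to proceed by induction on the time step $t$, maintaining the invariant that $x^t \in D$ together with a bookkeeping tag indicating which block or buffer contains it. Because $D$ is a staircase built from blocks and buffers glued along common faces, the goal reduces to verifying region by region that a single gradient-descent step from $x^t$ cannot land outside the union of the current region with its immediate successor on the staircase. The base case is immediate from (\ref{initialization}), which places $x^0$ inside $B_1$.

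I would first dispatch the block cases. For $x^t$ in an odd block $B_i$ with $x_1^t \ge s_1$ (the correct half), the choice $\eta = 1/(4L)$ makes the $x_2$ update $x_2^{t+1} = \tfrac{1}{2}(x_2^t + s_2)$, a contraction toward $s_2$ which keeps $x_2^{t+1}$ in the vertical range of $B_i$. The $x_1$ update $x_1^{t+1} = x_1^t + \tfrac{\gamma}{2L}(x_1^t - s_1)$ advances rightward by at most $\gamma\tau/(4L) \le \tau/4$, so $x^{t+1}$ lies in $B_i \cup B'_i$. For $x_1^t < s_1$ (the wrong half), the flipped quadratic combined with $\eta = 1/L_2$ gives $x_1^{t+1} = 2 s_1 - x_1^t$, reflecting the iterate across the vertical center line into the correct half of the same block while leaving $x_2$ unchanged in range. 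The even-block case is symmetric with the roles of $x_1$ and $x_2$ swapped.

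For the buffer regions, Lemma~1 caps the residence time by $1/(\eta\gamma)$. Using the explicit form $g(x_1,x_2) = g_1(x_1) + g_2(x_1)x_2^2$ from the previous theorem, the bound $g_1'(x_1) \le -\gamma\tau$ on the escape-direction derivative forces monotone progress toward the next block with per-step displacement capped by $\eta L\tau \le \tau/4$, so the iterate crosses into the next block without overshooting past it. The perpendicular-direction update is $x_2^{t+1} = x_2^t\bigl(1 - 2\eta g_2(x_1)\bigr)$, and since $g_2$ is bounded in absolute value by a constant multiple of $L$, the per-step factor $|1 - 2\eta g_2(x_1)|$ is bounded by an absolute constant; iterating over the at most $1/(\eta\gamma)$ steps in the buffer then bounds the multiplicative growth of the perpendicular coordinate by a constant depending only on $L/\gamma$.

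The main obstacle is this perpendicular bound in the buffer, because that constant can be appreciable. The closing observation is that the perpendicular coordinate entering each buffer is exponentially small as a function of the iteration count in the preceding block: there it was contracted by the factor $1/2$ per step, and the escape-time calculation in Section~2 shows that the number of iterations per block grows by a factor of $L/\gamma$ from one block to the next. The initialization $d_1^0 \le \tau/(2e^2)$ supplies the slack needed in the first block so that, after accumulating the buffer growth constants over all $n$ blocks, the perpendicular coordinate never reaches $\tau/2$ and the iterate therefore stays strictly inside the corresponding rectangle of $D$ throughout the run.
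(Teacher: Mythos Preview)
Your overall structure matches the paper's: induction, region-by-region analysis, checking that a single step lands in the current region or its immediate successor. The block cases and the escape-direction displacement bound in the buffer are handled essentially as in the paper. The gap is in the transverse (perpendicular) bound in the buffer, which you correctly flag as ``the main obstacle'' but do not actually close.

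The paper does not bound the per-step transverse factor by a generic ``absolute constant.'' It uses the specific estimate $\partial g/\partial x_2 \ge -2\gamma(x_2 - s_2)$, so that $d_2$ grows by at most $(1+2\eta\gamma)$ per buffer step. Combined with the contraction factor $(1-2\eta L)$ in the preceding block and with $t_i \ge 1/(\eta\gamma) \ge t_i'$ (the first inequality coming directly from the initialization bound $d_1^0 \le \tau/(2e^2)$), this yields
\[
\tfrac{\tau}{2}\,(1-2\eta L)^{t_i}(1+2\eta\gamma)^{t_i'} \;\le\; \tfrac{\tau}{2e^2}.
\]
Thus the escape-direction distance at entry to block $i+1$ satisfies exactly the same bound as the initialization. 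That \emph{self-renewing} invariant is the paper's inductive hypothesis; the induction is on block index, and each stage reproduces the hypothesis for the next stage without any accumulation.

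Your closing paragraph instead appeals to the exponential growth of $t_i$ from Section~2 and to ``accumulating the buffer growth constants over all $n$ blocks.'' Both moves are problematic. The escape-time growth $t_{i+1}\gtrsim (L/\gamma)t_i$ is established formally only in Theorem~3, \emph{after} this lemma, and its proof assumes the iterates already lie in $D$; invoking it here is circular. And there is no accumulation over $n$ blocks: because the escape and perpendicular coordinates swap roles at each block boundary, the new perpendicular enters each block with distance at most $\tau/2$ from center regardless of history, so the only inequality that matters is the one-block-plus-one-buffer estimate displayed above. Without pinning the buffer growth rate to $1+2\eta\gamma$ and comparing $t_i$ to $t_i'$, your ``absolute constant'' raised to $1/(\eta\gamma)=4L/\gamma$ steps need not be dominated by $(1/2)^{t_1}$, and the induction does not close at the very first stage.
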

\begin{proof}
We will use induction to prove this fact. 

$t_1$ must satisfy
\[
    d_1^0 (1 + 2\eta\gamma)^{t_1} > \frac{\tau}{2}
\]
By simple algebra, 
\[
    t_1 \ge \frac{1}{2\eta\gamma}\log(\frac{\tau / 2}{d_1^0}) \ge t_1'
\]

The initial point clearly belongs in $D$. Suppose $x^t \in B_1$, then $d_2$ is shrinking by a multiplicative factor, and $d_1$ is growing by a factor of $(1 + 2\eta\gamma) < 2$. Thus in the next iterate $x$ either stays in this block or escapes to the block immediately following it. 

Now suppose $x^t$ were in some odd-to-even buffer $B'_1$. Since $\frac{\partial g}{\partial x_1} > -L\tau$, 
\[
    x^{t+1}_1 \le x^{t}_1 +\eta L \tau \le x^t_1 + \tau. 
\]

After $t_1$ iterations from initialization, $d_2$ is no larger than $(1 - 2\eta L)^{t_1}$. We also know $\frac{\partial g}{\partial x_2} \ge -2\gamma (x_2 - s_2)$, thus inside the buffer $d_2$ grows by at most $(1 + 2\eta\gamma)$ per iteration. 
\begin{align*}
    d_2^{t_1 + t'_1 + 1} \le \frac{\tau}{2}(1 - 2\eta L)^{t_1} (1 + 2\eta \gamma)^{t'_1} &\le \frac{\tau}{2e^2}
\end{align*}
Note that this has exactly the same initialization condition as when gradient descent was initialized in the first block (with two coordinate flipped), and therefore we can apply induction. 
\end{proof}

\begin{theorem}
Let $f$ be as previously defined with $L \ge 2\gamma$. If $x^0$ is initialized inside first block and satisfies~(\ref{initialization}), then gradient descent with stepsize $\eta = \frac{1}{4L}$ takes $\Omega((\frac{L}{\gamma})^n)$ iterates to find the local minima. 
\end{theorem}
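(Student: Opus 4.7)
The plan is to prove the theorem by induction on the block index, simultaneously tracking (i) the perpendicular distance $d^{(i)}_\perp$ from the escape axis at the moment the iterates first enter block $i$, and (ii) the escape time $t_i$ from that block. The goal is to show that $t_i$ grows by a multiplicative factor of $\Theta(L/\gamma)$ each time, so that $t_n = \Omega((L/\gamma)^n)$, and hence $T_n \ge \sum_i t_i$ satisfies the same bound.

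First I would set up the inductive hypothesis $d^{(i)}_\perp \le \tau/(2e^2)$. This is exactly what is established for $i=2$ in the proof of Lemma 2, and it generalizes by the same mechanism: inside block $i$ the contracting coordinate shrinks by $(1-2\eta L)^{t_i} = 2^{-t_i}$ (using $\eta = 1/(4L)$), and inside the buffer it can expand by at most $(1+2\eta\gamma)^{t'_i} \le e^{2\eta\gamma \cdot t'_i} \le e^{2}$, where the last step uses Lemma~1's bound $t'_i \le 1/(\eta\gamma)$. Consequently
\[
   d^{(i+1)}_\perp \;\le\; d^{(i)}_\perp \cdot 2^{-t_i} \cdot e^{2},
\]
and whenever $t_i \ge 2/\log 2$ the hypothesis is preserved.

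Next I would extract the recursion for the escape time. Gradient descent can only leave block $i$ once the expanding coordinate, starting from $d^{(i)}_\perp$ and growing by $(1+2\eta\gamma)$ per step, exceeds $\tau/2$, which gives
\[
    t_i \;\ge\; \frac{\log\bigl(\tau/(2d^{(i)}_\perp)\bigr)}{\log(1+2\eta\gamma)}.
\]
Writing $u_i = \log(\tau/(2d^{(i)}_\perp))$ and using the distance recursion above, one obtains $u_{i+1} \ge u_i + t_i \log 2 - 2$. Combining this with $\log(1+2\eta\gamma) \le 2\eta\gamma = \gamma/(2L)$, so that $t_i \ge (2L/\gamma) u_i$, and noting that $u_i \ge 2$ by the inductive hypothesis, yields the clean geometric recurrence $u_{i+1} \ge (2L/\gamma)(\log 2)\, u_i$. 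The base case is delivered by the initialization \eqref{initialization}: $d_1^0 \le \tau/(2e^2)$ gives $u_1 \ge 2$ and therefore $t_1 \ge 4L/\gamma$. Iterating $n-1$ times and invoking the condition $L \ge 2\gamma$ to absorb the subtractive $-2$ terms into an absolute constant produces the claimed $t_n = \Omega((L/\gamma)^n)$.

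The main obstacle, beyond the bookkeeping above, is propagating the expansion estimate through a buffer region, where the function is a degree-$5$ spline rather than a pure quadratic, so the gradient is no longer exactly linear. I would use the uniform one-sided bounds on the partials of $g$ that were already extracted in the proof of Lemma~2, namely $\partial g/\partial x_2 \ge -2\gamma(x_2-s_2)$ in the escape direction and $-L\tau \le \partial g/\partial x_1 \le -\gamma\tau$ in the traversing direction; these are precisely what is needed to re-use both the quadratic-style expansion bound $(1+2\eta\gamma)^{t'_i}$ and Lemma~1's time bound $t'_i \le 1/(\eta\gamma)$. A secondary subtlety, already flagged in the paper's remark about the choice $\eta = 1/L_2$, is that an iterate which lands on the `wrong side' of a block gets reflected across the center line in a single step and then proceeds normally, so up to one wasted iteration per block the analysis is unaffected. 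Finally, since Lemma~2 guarantees the trajectory never leaves $D$, the Whitney extension in Step~5 does not interfere with any step of the argument.
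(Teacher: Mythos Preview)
The overall strategy matches the paper's, but there is a genuine error in your distance recursion. You define $d^{(i)}_\perp$ as the initial distance of the \emph{expanding} coordinate in block $i$ (this is what both your escape-time inequality $t_i \ge \log(\tau/(2d^{(i)}_\perp))/\log(1+2\eta\gamma)$ and the appeal to Lemma~2 force). But the coordinate that shrinks by the factor $2^{-t_i}$ during block $i$ and then becomes $d^{(i+1)}_\perp$ is the \emph{contracting} coordinate, not the expanding one. At the moment the iterate enters block $i$, that contracting coordinate has just finished traversing the previous buffer and sits at distance up to $\tau/2$ from the new center---not at distance $d^{(i)}_\perp$, which by your own inductive hypothesis is already exponentially small. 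The two coordinates swap roles at every block transition, so one cannot chain the shrink factors as if a single quantity were contracting throughout. The correct bound is
\[
    d^{(i+1)}_\perp \;\le\; \frac{\tau}{2}\cdot 2^{-t_i}\cdot e^{2},
    \qquad\text{equivalently}\qquad
    u_{i+1} \;\ge\; t_i\log 2 - 2,
\]
without the extra additive $u_i$ on the right that your version claims.

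The corrected recursion still yields the theorem: combining $u_{i+1} \ge t_i\log 2 - 2$ with $t_i \ge (2L/\gamma)\,u_i$ gives $t_{i+1} \ge (2L\log 2/\gamma)\,t_i - O(L/\gamma)$, and a fixed-point shift absorbs the additive loss. This is precisely how the paper proceeds: it skips the $u_i$ bookkeeping, uses $d^{T_{2i-2}}\le \tau/2$ directly to obtain $t_{2i} > (L/\gamma)\,t_{2i-1} - t'_{2i-1} > (L/\gamma)(t_{2i-1}-4)$, and then shifts by $4L/(L-\gamma)$ (this is where the hypothesis $L\ge 2\gamma$ enters, ensuring $t_1 > 4L/\gamma \ge 4L/(L-\gamma)$). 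Your treatment of the buffer estimates, the wrong-side reflection, and the irrelevance of the Whitney extension are all correct and match the paper.
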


\begin{proof}
Consider iterates when $x_2$ is trying to escape the $i$-th even block (or equivalently the block with $2i$-th saddle point), denote the saddle position in this block to be $(s_1, s_2)$. Let $d^t$ be the distance from $x_2^t$ to $s_2$ at the $t$-th iteration: $d^t = |x_2^t  - s_2|$. 

In the $i$-th odd block, when $T_{2i-2} < t < T_{2i-2} + t_{2i-1}$, $d$ shrinks by a multiplicative factor of $(1 - 2\eta L)$ at each iteration. Mathematically, $x_2$ obeys gradient descent iterates
\[
    d^{T_{2i-2} + t_{2i-1}} = d^{T_{2i-2}} (1 - 2\eta L) ^{t_{2i-1}}. 
\]
In the odd-to-even buffer immediately following this odd block, we can bound $d^t$ as 
\[
    d^{T_{2i-1}} < d^{T_{2i-2} + t_{2i-1}}(1 + 2\eta\gamma)^{t'_{2i-1}}
\]
that is, $d$ grows no larger by a factor of $(1 + 2\eta\gamma)$ at each iteration. 

And finally in the $i$-th even block, $d$ grows by a multiplicative factor of $(1 + 2\eta \gamma)$ at each iteration, or
\[
    d^{T_{2i-1} + t_{2i}} = d^{T_{2i-1}} (1 + 2\eta\gamma) ^{t_{2i}}. 
\]

Since $d^{T_{2i - 1} + t_{2i}} > \frac{\tau}{2}$, we must have
\begin{align*}
    d^{T_{2i-2}} (1 - 2\eta L) ^{t_{2i-1}}(1 + 2\eta\gamma)^{t'_{2i-1}}(1 + 2\eta\gamma)^{t_{2i}} > \frac{\tau}{2}
\end{align*}
taking logarithms,
\[
    t_{2i} > \frac{L}{\gamma}t_{2i-1} - t'_{2i-1} > \frac{L}{\gamma} (t_{2i-1} - 4)
\]
Then it is not hard to show
\[
    t_{2i} - \frac{4L}{L-\gamma} > \frac{L}{\gamma}(t_{2i-1} - \frac{4L}{L - \gamma})
\]
and $t_1 > \frac{4L}{\gamma} \ge \frac{4L}{L - \gamma}$. 

Thus for gradient descent to escape to the global minimum, it would take at least $\Omega((\frac{L}{\gamma})^n)$ iterations. 
\end{proof}

\section{Experiments}
In this section we report our empirical findings through simulations, see Figure~\ref{fig:experiments}. In our experiments the objective function is as defined previously. In simulations we fix the number of saddle points $n$ to be 9 and used two values of $L$: $L = 1$ and $L = 1.5$. For the stochastic version of gradient descent, at each descent step, we add a independent gaussian noise to each direction with mean 0 and variance 0.1. 

We make three remarks. First, we observe that the performance of gradient descent aligns with our theoretical finding: the number of iterations needed to escape saddle points is growing by a multiplicative factor each time, and the larger the ratio $\frac{L}{\gamma}$, the larger this multiplicative factor is. Second, we notice that gradient descent eventually gets stuck at some saddle point. This is because the gradient may becoming vanishingly small during iterations. 
%This is due to numerical errors in python. Too see this, suppose gradient descent spent many iterations in some odd block, then $x_2$ will converge toward the horizontal center line, and eventually due to numerical issues, $x_2$ will have the exact value as the center line. Then when gradient descent arrives at the next even block, $x_2$ will be exactly at a local maxima and remain stationary. Thus even though in theory gradient descent can escape saddle points, in practice it may be possible that gradient descent gets stuck. 
Thirdly, we note that SGD finds global minima efficiently without getting stuck. 
\begin{figure}
    \centering
    \includegraphics[scale = 0.65]{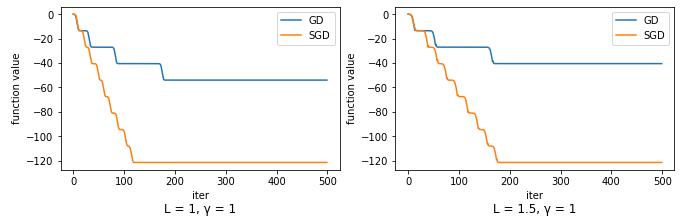}
    \caption{Performance of GD and SGD}
    \label{fig:experiments}
\end{figure}

\section{Conclusion}
In this paper we showed under a fairly reasonable initialization scheme with a non-pathological function of two variables, gradient descent might take exponential time in the number of saddle points to find a local minima. Further, we conducted empirical studies and verified our theoretical result. Our experiments also showed a noisy gradient descent algorithm is sufficient to find the global minima efficiently. Our result provide evidence that randomization and noise is a powerful tool to escape saddle points. 

The construction of our function does not allow second order derivatives. More careful treatment is needed to construct functions with well-defined Hessians in which gradient descent fails to escape saddle points efficiently. Another future direction might be to investigate negative results for gradient descent with momentum. Namely, does there exists some non-pathological function, such that accelerated gradient methods fail to converge efficiently?

\bibliographystyle{unsrt}
\bibliography{ref}

\end{document}